\theoremstyle{plain}
\newtheorem{theorem}{\textbf{Theorem}}
\newtheorem{lemma}[theorem]{\textbf{Lemma}}
\newtheorem{proposition}[theorem]{Proposition}
\theoremstyle{definition}
\renewcommand{\cite}{\citep}
\newcommand{\EE}{\mathbb{E}}
\DeclareMathOperator*{\argmin}{arg\,min}
\newcommand{\Vcal}{\mathcal{V}}
\newcommand{\Scal}{\mathcal{S}}
\newcommand{\Acal}{\mathcal{A}}
\newcommand{\Rmax}{R_{\max}}
\newcommand{\Vmax}{V_{\max}}
\newcommand{\RR}{\mathbb{R}}
\newcommand{\Xcal}{\mathcal{X}}
\newcommand{\Dcal}{\mathcal{D}}
\begin{document}
	
\title{A Note on Loss Functions and Error Compounding in Model-based Reinforcement Learning}

\author{
\name Nan Jiang \email nanjiang@illinois.edu
}

\maketitle

\renewcommand*{\theHsection}{\thesection}
\renewcommand*{\theHsubsection}{\thesubsection}

\begin{abstract}
This note clarifies some confusions (and perhaps throws out more) around model-based reinforcement learning and their theoretical understanding  in  the context of deep RL. Main topics of discussion are (1) how to reconcile model-based RL's bad empirical reputation on error compounding with its superior theoretical properties, and (2) the limitations of empirically popular losses. For the latter, concrete counterexamples for the ``MuZero loss'' are constructed to show that it not only fails in stochastic environments, but also suffers exponential sample complexity in deterministic environments when data provides sufficient coverage.  
\end{abstract}


\section{Background}
The relative advantages of model-based and model-free reinforcement learning (RL) are a long-standing debate. Model-free methods learn policies and/or value functions, and the function approximators typically have straightforward label spaces (e.g., action space for policies) with naturally defined loss functions (squared loss for value prediction). 

In contrast, model-based methods approximate the dynamics of the environment, which has a potentially high-dimensional and complex label space, especially in settings commonly found in the deep RL literature (e.g., pixel domains). Theoretical analyses of model-based RL often assume maximum likelihood estimation (MLE), but practical algorithms deviate from this framework in various ways. This disconnection, among other factors, results in a lack of clarity in discussions around model-based RL, and successes and failures are often explained based on reasoning that does not stand scrutiny. For example:

\begin{itemize}
\item Empirically, model-based RL is infamously known for its ``error compounding'' issue, which is anecdotally much worse than model-free methods. However, theory tells us that the error propagation of model-based RL is on par with and sometimes \textit{better} than model-free RL. 
\item When the raw observation (or simply \textit{observation}) is complex and high-dimensional, practical methods often  use a learned encoder to map the observations to a latent state representation (or \textit{latent state}) and fit  latent dynamics. To avoid measuring the observation prediction error, various losses are proposed, such as bisimulation loss and reward prediction loss. However, the validity of these losses in general settings is not widely known, and we provide concrete counterexamples and failure modes of these losses to help understand their limitations and applicability. 
\end{itemize}

This note aims to clarify some of these confusions, with the following caveats:

\begin{enumerate}
\item This note is not intended as a comprehensive survey, and citations can be minimal. Descriptions of common practice are from anecdotes and meant to provide contexts and backgrounds for concrete results and mathematical claims, which the readers should interpret by themselves. 
\item When it comes to a family of methods, the note will often use a highly simplified and minimal setup and cannot include all the practical variants. That said, the high-level claims will likely hold for small variants of the same idea. 
\end{enumerate}

\section{Setup and Notation}
\paragraph{Markov Decision Processes (MDPs)} We assume standard notation for an infinite-horizon discounted MDP $M^\star = (\Scal, \Acal, P^\star, R^\star, \gamma, s_{\textrm{init}})$ as the true environment, where $s_{\textrm{init}}$ is the deterministic initial state  w.l.o.g. All rewards are assumed to be bounded in $[0, \Rmax]$, and  $\Vmax:= \Rmax/(1-\gamma)$ is an upper bound on the cumulative reward and value functions. We assume $\Scal$ is large and thus tabular methods do not apply. In most cases we consider  finite and small action spaces, but some claims also hold in large action spaces. 
Given a policy $\pi$, its expected return is $J_{M^\star}(\pi) := \EE_{M^\star, \pi}[\sum_{t=0}^\infty \gamma^t r_t]$. Sometimes, especially in counterexamples, it will be easier to consider an $H$-step finite-horizon problem, where the expected return is defined as (with an abuse of notation) $J_{M^\star}(\pi):=\EE_{\pi}[\sum_{h=1}^H r_h]$. 

\paragraph{Learning Settings} For the discounted setting, a common simplification is to assume that  the dataset $\Dcal$ consists of i.i.d.~tuples $\{(s,a,r,s')\}$, where $(s,a) \sim d^{\Dcal}$, $r = R^\star(s,a)$, $s' \sim P^\star(\cdot|s,a)$. For the finite-horizon setting, the dataset consists of i.i.d.~trajectories $(s_1, a_1, r_1, \ldots, s_H, a_H, r_H)$ sampled from some behavior policy $\pi_\Dcal$. 

\paragraph{Planning and Off-Policy Evaluation} We consider algorithms that learn a model $M$ from data. Note that here $M$ may not have the same type as $M^\star$ as it can be a latent-state model (Section~\ref{sec:latent}). However, regardless of its inner workings, a model should be able to perform (1) planning: outputting a policy $\pi_{M}^\star$ that is hopefully near-optimal, and (2) off-policy evaluation (OPE): estimating $J_{M^\star}(\pi)$ for a given
$\pi$. We use $J_M(\pi)$ to denote the estimated value given by model $M$. 
We will examine various algorithms and see if they correctly perform these tasks with a sufficient amount of data and appropriate coverage assumptions. We will focus on OPE most of the times, as it avoids the pessimism/optimism aspects of planning which is orthogonal to the main topics. 

\section{Error Compounding?} \label{sec:compound}
We get to the first question raised in the introduction: how to reconcile the bad reputation on error compounding of model-based RL with its superior theoretical properties? To be more concrete, below is a standard result that characterizes the error propagation in model-based RL. For simplicity, in this section we will only consider candidate model that makes raw observation predictions and the reward function is known. 

\begin{lemma}[Simulation Lemma]
For any $P: \Scal\times \Acal\to \Delta(\Scal)$ and any $\pi: \Scal\to \Delta(\Acal)$, let $J_M(\pi)$ be the expected return of $\pi$ in $M$ specified by $(P, R^\star)$. Then,
\begin{align} \label{eq:simu}
|J_{M^\star}(\pi) - J_M(\pi)| = &~ \left|\frac{\gamma}{1-\gamma} \, \EE_{(s,a) \sim d_{M^\star}^\pi}[\langle P(\cdot|s,a) - P^\star(\cdot|s,a), V_{M}^\pi(\cdot) \rangle ] \right| \\
\le &~ \frac{\Vmax}{1-\gamma} \, \EE_{(s,a) \sim d_{M^\star}^\pi}[\|P(\cdot|s,a) - P^\star(\cdot|s,a)\|_1]. \label{eq:tv}
\end{align}
where $d_{M^\star}^\pi$ is the normalized discounted state-action occupancy induced from $\pi$ in $M^\star$. 
\end{lemma}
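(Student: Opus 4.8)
The plan is to reduce the claim to a statement about the value functions at the initial state, since $J_{M^\star}(\pi) = V_{M^\star}^\pi(s_{\textrm{init}})$ and $J_M(\pi) = V_M^\pi(s_{\textrm{init}})$, and then to convert the model's value estimate into an expectation over trajectories generated in the \emph{true} environment. The device for this conversion is the telescoping identity
\[
V_M^\pi(s_{\textrm{init}}) = \EE_{M^\star, \pi}\left[\sum_{t=0}^\infty \gamma^t \big(V_M^\pi(s_t) - \gamma V_M^\pi(s_{t+1})\big)\right],
\]
which holds for trajectories rolled out under $\pi$ in $M^\star$ because the right-hand side telescopes to $V_M^\pi(s_0) = V_M^\pi(s_{\textrm{init}})$ (the boundary term vanishes since $\gamma^t \maxnorm{V_M^\pi} \to 0$).

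Subtracting this from $J_{M^\star}(\pi) = \EE_{M^\star,\pi}[\sum_t \gamma^t r_t]$ gives $J_{M^\star}(\pi) - J_M(\pi) = \EE_{M^\star,\pi}[\sum_t \gamma^t(r_t - V_M^\pi(s_t) + \gamma V_M^\pi(s_{t+1}))]$. I would then work on a single term: conditioning on $s_t$ and averaging over $a_t \sim \pi$, $s_{t+1} \sim P^\star(\cdot|s_t,a_t)$, the reward contributes $R^\star(s_t,a_t)$ and the last term contributes $\gamma\dotprod{P^\star(\cdot|s_t,a_t)}{V_M^\pi}$. The crucial step is to expand $V_M^\pi(s_t)$ using the Bellman equation \emph{in the model} $M$, namely $V_M^\pi(s_t) = \EE_{a_t\sim\pi}[R^\star(s_t,a_t) + \gamma\dotprod{P(\cdot|s_t,a_t)}{V_M^\pi}]$: the identical (known) reward cancels, leaving exactly $\gamma\dotprod{P^\star(\cdot|s_t,a_t) - P(\cdot|s_t,a_t)}{V_M^\pi}$. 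Re-summing over $t$ and recognizing $\sum_t \gamma^t \PP_{M^\star,\pi}[(s_t,a_t)=(s,a)] = \tfrac{1}{1-\gamma}d_{M^\star}^\pi(s,a)$ yields the factor $\tfrac{\gamma}{1-\gamma}$ and the occupancy expectation; taking absolute values gives \eqref{eq:simu} (the sign inside the inner product is immaterial under $|\holder|$).

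For the bound \eqref{eq:tv}, I would apply H\"older's inequality pointwise, $|\dotprod{P(\cdot|s,a) - P^\star(\cdot|s,a)}{V_M^\pi}| \le \|P(\cdot|s,a) - P^\star(\cdot|s,a)\|_1 \, \maxnorm{V_M^\pi}$, together with $\maxnorm{V_M^\pi} \le \Vmax$ and $\gamma \le 1$ to replace $\tfrac{\gamma}{1-\gamma}$ by $\tfrac{1}{1-\gamma}$. The main things to get right are bookkeeping rather than deep: ensuring the value function that appears is the model's $V_M^\pi$ (not $V_{M^\star}^\pi$) while the occupancy is the true $d_{M^\star}^\pi$ --- this asymmetry is exactly what the choice to expand the Bellman equation in $M$ (and to telescope along $M^\star$ trajectories) produces --- and correctly tracking the $1/(1-\gamma)$ normalization of the discounted occupancy. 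A purely algebraic alternative avoids trajectories altogether by writing $V_{M^\star}^\pi - V_M^\pi = \gamma(\II - \gamma P_\pi^\star)^{-1}(P_\pi^\star - P_\pi)V_M^\pi$ via the resolvent of the policy-induced transition operator $P_\pi^\star$, but the telescoping form is cleaner when $\Scal$ is large.
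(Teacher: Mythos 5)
Your proof is correct. The paper itself states the Simulation Lemma as a standard result and gives no proof, so there is nothing to diverge from; your telescoping argument --- expand $V_M^\pi(s_{\textrm{init}})$ along trajectories drawn from $(M^\star,\pi)$, cancel the shared reward $R^\star$ via the Bellman equation of $V_M^\pi$ \emph{in the model}, and collect the $\gamma^t$-weighted visitation probabilities into $\tfrac{1}{1-\gamma}d_{M^\star}^\pi$ --- is precisely the canonical derivation this lemma refers to, and it correctly produces the asymmetry (true occupancy, model value function) appearing in \eqref{eq:simu}; H\"older with $\maxnorm{V_M^\pi}\le\Vmax$ and $\gamma\le 1$ then gives \eqref{eq:tv}. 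The only step left implicit is the interchange of expectation and infinite sum in the telescoping identity, which is justified by dominated convergence since the summands are bounded by $\gamma^t(1+\gamma)\Vmax$; your resolvent identity $V_{M^\star}^\pi - V_M^\pi = \gamma(\II-\gamma P_\pi^\star)^{-1}(P_\pi^\star - P_\pi)V_M^\pi$ is also a valid equivalent route.
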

The $1/(1-\gamma)$ term clearly indicates a linear-in-horizon error accumulation, which is \textbf{the best one can hope for} in general: almost all algorithms that learn from 1-step transition data, including many model-free algorithms (especially in the TD family), have at least linear-in-horizon error propagation. The above lemma is for OPE; when it comes to planning, popular model-free methods often incur \textit{quadratic} dependence on horizon (i.e., $1/(1-\gamma)^2$) \citep{scherrer2012use}, whereas model-based method still enjoys linear error accumulation. 

So how to reconcile this result with the practical claim that error compounding is particularly bad for model-based RL? Here we give one possible explanation, and for that we shall turn to the 1-step error term: the lemma shows that model-based RL performs OPE accurately, if $\|P(\cdot|s,a) - P^\star(\cdot|s,a)\|_1$ can be made small under the distribution of $d_{M^\star}^\pi$. However, $\|\cdot\|_1$ corresponds to total-variation error and is hard to estimate/minimize directly (with exceptions \cite{sun2019model}). Instead, the learning algorithms (in theory) perform MLE:
$$
\textstyle \argmin_{P \in \mathcal{P}} \sum_{(s,a,s') \in \Dcal} - \log P(s'|s,a). 
$$
With   realizability assumptions and sufficient data, one can show that the KL-divergence between $P(\cdot | s,a)$ and $P^\star(\cdot|s,a)$ is small, which leads to an upper bound on TV error via Pinsker's inequality.\footnote{For any two distributions $p$ and $q$ over the same space, Pinsker's inequality states that $\|p-q\|_{\textrm{TV}} \le \sqrt{\tfrac{1}{2}\textrm{KL}(p \,\|\, q)}$.} Finally, this error is averaged under the data distribution $d^\Dcal$, but can be translated to $d_{M^\star}^\pi$ if data covers $\pi$, e.g., $\|d_{M^\star}^\pi / d^\Dcal\|_\infty$ is bounded.


\paragraph{Deterministic models and $L_2$ loss} The reason that the above theory often does not apply is that, in the deep RL literature, very few works use MLE loss with a stochastic model. A potential reason is that MLE treats the problem as a classification task with a huge and unstructured label space, where each possible state is a separate label, so it can be hard to work with. Instead of using the MLE loss with a stochastic model, what is more common is the following setup:
\begin{itemize}
\item \textbf{Deterministic model.} The model makes deterministic predictions of the next state, i.e., $\hat s' = f(s,a)$.
\item \textbf{Non-MLE loss.} Loss is $\|s' - f(s,a) \|$ (perhaps squared). Here the common setup is that states are embedded in an Euclidean space (e.g., $\Scal \subset \RR^d$), and $\|\cdot\|$ is the $L_2$ norm. An example is that $s$ is pixel observations and $\|\cdot\|$ is the $L_2$ norm, but there are other choices (e.g., rescaling the axes of $\RR^d$   due to normalization leads to a different $L_2$ loss). 
\end{itemize}

Of course, there is the question of whether the true environment can be well-approximated by deterministic dynamics. But even that aside, the above formulation \textbf{simply does not lead to a small TV error} without further assumptions. 
There is some empirical evidence supporting this view, related to whether model predicts ``legal'' vs.~``illegal'' states: \citet{talvitie2017self} showed that learned models (over raw pixels) can easily generate illegal configurations that can never be produced by the real system, and often   this leads to bad error accumulation. In contrast, if TV error were properly minimized, the learned model should generate real observations most of the times. 


\paragraph{Smoothness} The above argument explains why model-based RL can have bad error-compounding behaviors, and suggests that it is not due to error accumulation but likely the 1-step error itself being incompatible with what theory requires. That said, we still need to reconcile with the fact that model-based RL \textit{does} work sometimes, even in domains with high-dimensional observations, and answer the question: when is $\|s' - f(s,a)\|$ a justified loss? 

A plausible explanation is smoothness. For now let's suppose $P^\star$ is also deterministic and we write it as $s' = f^\star(s,a)$. The question is when a small $\|f^\star(s,a) - f(s,a)\|$, which is the loss function, translates to small $|J_{M}(\pi) - J_{M^\star}(\pi)|$ ($M$ is the model with $f$ as the dynamics and $R^\star$ as the reward function). It turns out that a sufficient condition is $V_M^{\pi}$ being smooth in $\|\cdot\|$. 

In particular, if $\forall s,\tilde s \in \Scal$,
\begin{align*}
& \textrm{(Lipschitz smoothness of $V_M^\pi$ in $\|\cdot\|$)} & \qquad  |V_M^\pi(s) - V_M^\pi(\tilde s)| \le L \|s - \tilde s\|,
\end{align*}
then we immediately have
$$
|V_M^\pi(f(s,a)) - V_{M}^\pi(f^\star(s,a))|  \le L \|f(s,a) - f^\star(s,a)\|.
$$
The LHS is $|\langle P(\cdot|s,a) - P^\star(\cdot|s,a), V_{M}^\pi(\cdot) \rangle|$ from Eq.\eqref{eq:simu}, or more precisely, its special case when both $P$ and $P^\star$ are deterministic (which we denote as $f$ and $f^\star$, respectively). This is a tighter measure of model error than the TV error and is directly controlled by 
$\|f(s,a) - f^\star(s,a)\|$. 
This analysis can also be extended to the stochastic setting, where one should minimize the Wasserstein distance between $P^\star(\cdot|s,a)$ and $P(\cdot|s,a)$ with $\|\cdot\|$ as the base metric \citep{asadi2018lipschitz}.

\paragraph{Linear control problems} A concrete problem setting where algorithms work both in practice and in theory is linear control. In a standard LQR model, not only using the $\|s' - f(s,a)\|^2$ loss corresponds to MLE (assuming a known Gaussian noise model), the value functions are also always quadratic in the state vector, which naturally yields smoothness. 

\section{Latent Models and Non-observation-prediction Losses} \label{sec:latent}

\newcommand{\lat}{\textrm{latent}}

Many difficulties discussed in the previous section arise from the fact that the problem of predicting the next raw observation has a very large label space. To circumvent this, a popular approach is to learn \textit{latent} dynamics. More precisely, a latent model consists of (at least) two components:
\begin{enumerate}
\item An encoder $\phi: \Scal\to\Xcal$ that maps raw observations to some latent state space $\Xcal$. $\Xcal$ is generally considered ``nicer'' than $\Scal$ in some way (smaller in the discrete case, lower-dimensional or less entangled in the continuous case, etc.).
\item Latent dynamics $P_\lat: \Xcal\times\Acal\to\Delta(\Xcal)$. 
\end{enumerate}
For simplicity we assume that reward is known, and can be expressed by any candidate encoder $\phi$ under consideration. That is, $R(s,a)$ only depends on $s$ through $\phi(s)$. Given such a model,\footnote{OPE is similar, except that the latent model can evaluate policies that depend on $s$ through $\phi(s)$.} planning can be done by (1) planning in the latent model, which is an MDP defined over $\Xcal$ as its state space, and (2) \textit{lifting} the policy to the observation space: given $\pi_{\lat}: \Xcal\to\Delta(\Acal)$, the lifted policy is $s \mapsto \pi_{\lat}(\phi(s))$. 

The remaining question is: what loss should we use to learn $(\phi, P_\lat)$? 
As a starter, we can also learn a decoder $\psi: \Xcal \to \Delta(\Scal)$, in which case we can still use the losses in Section~\ref{sec:compound}. However, in this case $(\phi, P_{\lat}, \psi)$ together is just a particular parameterization of a $\Scal\times\Acal\to\Delta(\Scal)$ model and suffers the same issues. Below we are  interested in losses that do not require a decoder. 

\subsection{Bisimulation loss}
One natural idea is to check if the latent state ``rolls forward'' on its own. This idea finds its root in the bisimulation literature \citep{ferns2004metrics}, which is easier to describe when $\Scal$ and $\Xcal$ are discrete: define
$$ \textstyle
P_\phi^\star(x'|s,a) := \sum_{s': \phi(s') = x'} P^\star(s'|s,a).
$$
$\phi^\star$ is a bisimulation abstraction with $(R_{\lat}^\star, P_{\lat}^\star)$ as the induced abstract model, if for all $s \in \Scal, a\in\Acal$, $x' \in \Xcal$,
\begin{align*}
R^\star(s,a) = R_{\lat}^\star(\phi^\star(s),a), \qquad
P_{\phi^\star}^\star(x'|s,a) =   P_{\lat}^\star(x'|\phi^\star(s), a).
\end{align*}
Such a model is guaranteed to recover the optimal policy and   accurately evaluate policies that depend on $\phi^\star(s)$ \citep{nan_abstraction_notes}. Bisimulation is an attractive conceptual framework for focusing on task-relevant aspects of the environment and ignoring distractions (c.f.~the ``noisy-TV'' problem). 

\textbf{Loss and its problems in stochastic environments}~~
Learning a good $P_{\lat}$ is easy if a good $\phi$ is given and fixed: all we need to do is (1) convert $\Dcal$ into a dataset over $\Xcal$, by mapping each transition tuple $(s,a,r,s')$ to $(x,a,r,x'):= (\phi(s), a, r, \phi(s'))$, and (2) learn a $P_{\lat}$ that predicts $x'$ well from $(x,a)$ using MLE or other losses, as discussed in Section~\ref{sec:compound}. To keep things simple, let's stick to MLE for now and assume it is effective on the $\Xcal$ space. 

The tricky part is to design a loss when $\phi$ is not given and must be learned. A natural idea is
$$
\argmin_{\phi, \, P_{\lat}} \, \ell(P_{\lat}; \phi(\Dcal)).
$$
Here $\phi(\Dcal)$ is the converted dataset $(x,a,r,x')$ based on a given $\phi$, and $\ell$ is the MLE loss of predicting $x'=\phi(s')$ from $x=\phi(s)$ and $a$, i.e.,  $\sum_{(s,a,s')} - P_{\lat}(\phi(s')|\phi(s),a)$. Recall that we assume all candidate $\phi$ can perfectly predict rewards, otherwise a reward error term should   be included. 

This approach likely works in theory if ``true'' latent dynamics (in the sense of $P_{\lat}^\star$ induced by some bisimulation $\phi^\star$) are deterministic and we can minimize the loss $\ell$ to nearly $0$. 
However, if the true latent dynamics are stochastic, the loss runs into a variant of the infamous \textit{double-sampling problem} for value-based RL \citep{baird1995residual, farahmand2011model}. 
At an intuitive level, the problem is that by choosing $\phi$ we are not only doing feature extraction in the input space ($x = \phi(s)$), but at the same time defining what labels we need to predict ($x' = \phi(s')$), creating an incentive for choosing degenerate $\phi$ that makes $x' = \phi(s')$ easier to predict. 

\newcommand{\entropy}{\textrm{entropy}}

More concretely: fix an $(s,a)$ pair and let $(\phi^\star, P_{\lat}^\star)$ be a perfect bisimulation abstraction and its induced model. The expected MLE loss (w.r.t.~the randomness of $s' \sim P^\star(\cdot|s,a)$) of $(\phi^\star, P_{\lat}^\star)$ on this $(s,a)$ is (the overall loss takes a further expectation over $(s,a) \sim d^\Dcal$) 
$$-\sum_{x'\in \Xcal} P_{\lat}^\star(x'|\phi^\star(s), a) \log P_{\lat}^\star(x'|\phi^\star(s), a) = \entropy(P_{\lat}^\star(\cdot|\phi^\star(s), a)).$$ 
This value can be non-zero, but that is ok---if the dynamics are inherently stochastic, no one can make perfect predictions and achieve $0$ loss. All we need is predictions that are \textit{as good as} predicting $x'$ from $(s, a)$. In other words, what we really want to minimize to $0$ is a form of \textbf{excess risk}: 
$$
\entropy(P_{\lat}(\cdot|\phi(s), a)) - \entropy(P_\phi^\star(\cdot|s, a)).
$$
The problem is, we do not have access to the $\entropy([P_\phi^\star(\cdot|s, a)])$ term, and \textit{its value changes with $\phi$}. So when we see a $(\phi, P_{\lat})$ pair with a high loss, we cannot tell between the two situations:
\begin{enumerate}
\item $\entropy(P_\phi^\star(\cdot|s, a))$ is low and excess risk high --- bad $(\phi, P_{\lat})$.
\item $\entropy(P_\phi^\star(\cdot|s, a))$ is high and excess risk low --- good $(\phi, P_{\lat})$. 
\end{enumerate}


\subsection{Multi-step reward prediction loss}
The issue with bisimulation loss makes it desirable that we ask all models to predict the same object in the loss. The multi-step reward prediction loss (also colloquially known as the ``MuZero loss'' \cite{schrittwieser2020mastering}), precisely does this. 

For clarity, it will be convenient to switch to the finite-horizon setting, where a data trajectory looks like $s_1, a_1, r_1, \ldots, s_H, a_H, r_H$.\footnote{This avoids the question of ``how many steps should we roll out'', as in the finite-horizon setting it is natural to roll all the way to the end.} The loss is defined as follows: given a candidate model $P: \Scal\times\Acal\to\Delta(\Scal)$ (we will comment on latent models later), for each $h$, and for each data trajectory, let $s_h$ be the state observed in the data. Then, starting from $s_h$, let $\hat s_h = s_h$, and for $h'=h, h+1, \ldots, H$, 
\begin{itemize}
\item Predict reward: $\hat r_{h'} = R(\hat s_{h'}, a)$.
\item Sample the next state using the candidate model $P$: $\hat s_{h'+1} \sim P(\cdot| \hat s_{h'}, a_{h'})$. 
\end{itemize}
The loss associated with $s_h$ is $\sum_{h'=h}^H (r_h - \hat r_h)^2$, and then we sum  over all $h$ and average over trajectories. 
Essentially we start from $s_h$ in the data, reproduce the data actions $a_{h:H}$, but use the candidate dynamics $P$ to roll-out states. However, we do not measure whether the sampled states are ``correct'', but only use these states to predict the rewards $\hat r_h$ and measure reward prediction error. This way, all models ``speak the same language'' and seem to be compared fairly. 

The above description assumes an observation-level dynamics, but one can easily apply it to latent models as well. As we will explain, however, to see the problems with the loss we do not even need to consider latent models, so we will restrict ourselves to observation-level models in discussion. 

\paragraph{True model does not minimize the loss in stochastic environments} Reproducing data actions $a_{h:H}$ without looking at the states $s_{h+1:H}$ is inherently treating the action sequence in an ``open-loop'' manner, which leads to problems in stochastic environments, as also pointed out by \citet{voelcker2023lambda}. 

\begin{figure}[t]
\centering
\includegraphics[scale=.8]{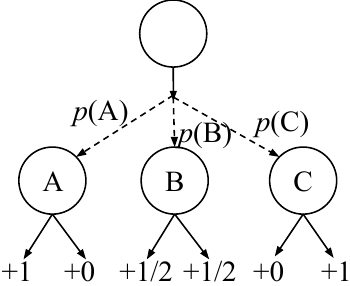}
\caption{Construction for Proposition~\ref{prop:muzero_stoch}. The top state is the initial state, where both actions lead to the same distribution over A, B, C. The numbers at the second level indicate rewards for taking left (L) and right (R) actions from states A, B, and C, respectively. \label{fig:muzero_stoch}}
\end{figure}

\begin{proposition} \label{prop:muzero_stoch}
There exist stochastic finite-horizon MDPs and a data collection policy, where (1) some $M\ne M^\star$ has smaller loss than $M^\star$, even with infinite amount of data, and (2) $M$ gives wrong evaluation to some policies.
\end{proposition}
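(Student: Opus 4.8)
The plan is to realize the construction of Figure~\ref{fig:muzero_stoch} as a two-step ($H=2$) MDP and read off the failure directly from the open-loop replay of actions. I would take a deterministic initial state $s_{\textrm{init}}$ from which both actions lead to the same distribution over three states $A, B, C$ (say uniform), assign step-$1$ reward $0$, and choose the step-$2$ rewards so that $A$ and $B$ have opposite action profiles while $C$ is reward-insensitive to the action: $R(A,L)=1$, $R(A,R)=0$, $R(B,L)=0$, $R(B,R)=1$, and $R(C,L)=R(C,R)=1$. Crucially, I take the data policy $\pi_\Dcal$ to be \emph{closed-loop}: it reads the true step-$2$ state and plays the reward-$1$ action ($L$ in $A$, $R$ in $B$, either action in $C$), so that the observed reward $r_2$ equals $1$ on every trajectory. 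Throughout, $R=R^\star$ is known (per the setup), so a candidate model only specifies the dynamics.

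The first step is to observe that the only nontrivial contribution to the loss is the $h'=2$ prediction inside the $h=1$ rollout: the $h=2$ rollout starts from the true $s_2$ and reproduces $r_2$ exactly, and the step-$1$ reward is matched deterministically. For $M^\star$, the rolled-out state $\hat s_2\sim P^\star(\cdot|s_{\textrm{init}},a_1)$ is drawn \emph{independently} of the true $s_2$ that selected the replayed action $a_2$, so $R(\hat s_2,a_2)$ frequently disagrees with $r_2=1$; enumerating the nine $(s_2,\hat s_2)$ cases (with $a_2$ determined by $s_2$) gives expected loss $1/3>0$. I then exhibit the competing model $M$ that differs from $M^\star$ only in transitioning \emph{deterministically} to $C$ from $s_{\textrm{init}}$. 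Since $R(C,\cdot)\equiv 1$, replaying any data action at $C$ predicts $\hat r_2=1=r_2$, so $M$ has loss exactly $0<1/3$ on every trajectory, and hence strictly below $M^\star$ even with infinite data; this establishes part (1) (and in fact shows $M^\star$ is not even a minimizer).

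For part (2), I evaluate the policy $\pi$ that plays $R$ at step $2$ irrespective of the state (a behavior the adaptive data policy never imitates). Under $M^\star$ its value is $\tfrac13\bigl(R(A,R)+R(B,R)+R(C,R)\bigr)=2/3$, whereas under $M$ the deterministic transition to $C$ gives $R(C,R)=1$. Thus $J_{M}(\pi)=1\neq 2/3=J_{M^\star}(\pi)$, so the strictly-better (indeed loss-minimizing) model misevaluates $\pi$.

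The conceptual crux---and the only place needing care---is the \emph{joint} design of the closed-loop data policy and the reward-insensitive state $C$. The data policy must make the observed rewards explainable by a degenerate deterministic model, while that same model must remain genuinely wrong for some evaluation policy. The arithmetic is routine once the construction is fixed; the real work is choosing rewards so that (a) a wrong deterministic model attains strictly smaller (here zero) loss than $M^\star$, and (b) this model still assigns an incorrect value to a policy whose state-conditioned action pattern the data never covers.
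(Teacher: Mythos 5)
Your proof is correct, and it establishes the proposition as literally stated, but the mechanism is genuinely different from the paper's. The paper keeps the data policy \emph{uniformly random} (state-independent): with rewards $R(A,L)=1, R(A,R)=0$, $R(C,L)=0, R(C,R)=1$, and $R(B,\cdot)=1/2$, the true model $M^\star$ ($P(A)=P(C)=1/2$) incurs expected loss $1/2$ because its sampled rollout state is independent of the data's $s_2$, while the wrong model $M$ ($P(B)=1$) predicts the constant conditional-mean reward $1/2$ and gets loss $1/4$; the failure is the classic ``squared loss rewards mean prediction over distribution matching'' phenomenon. Your construction instead makes the data policy \emph{closed-loop}, so that $r_2\equiv 1$, and the wrong model wins by collapsing onto the reward-insensitive state $C$; the failure is that open-loop replay cannot reproduce the state--action correlation injected by the behavior policy. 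What your version buys: $M$ attains exactly zero loss, so $M^\star$ is not merely beaten but the argmin itself is wrong, which is a cleaner form of claim (1). What it costs: your target policy plays $R$ at $A$, yet the data policy never does, so $d_{M^\star}^\pi(A,R)/d^{\Dcal}(A,R)=\infty$ and the standard state-action coverage condition fails --- precisely the subtlety the paper flags immediately after its proof and takes care to avoid (in its construction $B$ is unreachable under \emph{every} policy in $M^\star$, so the ratio stays bounded, and it can be fixed entirely by setting $P(B)>0$). As written, a defender of the loss could attribute your failure to poor coverage rather than to the loss itself; the paper's uniform-data-policy version forecloses that objection and shows the loss fails even under the most natural exploratory data collection. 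Your example can be patched by mixing an $\epsilon$-uniform action choice into the behavior policy (then coverage is bounded by $O(1/\epsilon)$, $M$'s loss becomes $O(\epsilon)$, and the gap to $M^\star$'s loss $\approx 1/3$ survives), and adding that patch would make your construction essentially as strong as the paper's.
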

\begin{proof}
See construction in Figure~\ref{fig:muzero_stoch}, which is a finite-horizon MDP with two actions (L and R) and $H=2$. Let data collection policy be uniformly random. We will consider two models, $M$ and $M^\star$, and both of them agree on the end rewards and only differ in $P(A), P(B), P(C)$, so the reward prediction loss at $h=2$ is $0$ and we only need to look at loss at $h=1$. The true model $M^\star$ has $P(A)=P(C)=0.5$ and $P(B) = 0$, and the wrong model $M$ has $P(B) = 1$. The expected loss for $M^\star$ is $1/2$, but that for $M$ is $1/4$. Furthermore, $\pi = \pi_{M^\star}^\star$ (i.e., taking L in A and R in C) is evaluated differently in $M^\star$ ($J_{M^\star}(\pi) = 1$) and in $M$ ($J_M(\pi) = 1/2$). 
\end{proof}

One subtlety is that state $B$ never appears in data, which seems to indicate insufficient data coverage. This can be fixed by making $P(B)$ non-zero in $M^\star$. In fact, the standard definition of coverage (as in the analyses sketched in Section~\ref{sec:compound}) considers the boundedness of $d_{M^\star}^\pi(s,a) / d_{M^\star}^{\pi_\Dcal}(s,a)$, which is well bounded even when $P(B) = 0$. 

\paragraph{Exponential distribution shift} Suppose we are happy with deterministic dynamics. Does the reward-prediction loss enjoy good guarantees? Asymptotically, yes. If we take actions uniformly at random, we will see every possible action sequence with non-zero probability, and guarantee that the reward prediction for the action sequence is accurate, if we have infinite amount of data. 

However, the situation is less favorable when we examine its finite-sample behavior. In OPE, finite-sample errors crucially depend on the notion of \textit{coverage}, which can be viewed as a measure of ``effective sample size'' that characterizes how much information in the training data is relevant to evaluating the target policy. 
The technical definition of coverage varies across OPE algorithms: some algorithms, like importance sampling 
\citep{precup2000eligibility}, require \textit{trajectory coverage}, measured by the boundedness of $\prod_{h=1}^H \frac{\pi(a_h|s_h)}{\pi_\Dcal(a_h|s_h)}$, which can be   exponential if $\pi_\Dcal$ are $\pi$ are not close. As briefly discussed several times, model-based RL with MLE loss only requires \textit{state-action coverage}, often measured by the boundedness of $d_{M^\star}^{\pi}(s_h,a_h) / d_{M^\star}^{\pi_\Dcal}(s_h,a_h)$, which can be much smaller than the cumulative product of importance weights \cite{munos2007performance}. 

Below we show that, the reward prediction loss cannot have a guarantee that depends on state-action coverage, and trajectory coverage is a more accurate characterization. 

\begin{proposition} \label{prop:muzero_exp}
There exists a deterministic MDP $M^\star$, a data-collection policy $\pi_\Dcal$,  and a target policy $\pi$, such that (1) $d_{M^\star}^\pi(s_h,a_h) / d^\Dcal(s_h,a_h) \le 2$ for any $s_h, a_h$, and (2) there exists a model $M$ that makes wrong OPE predictions, yet the probability that reward-prediction loss can distinguish it from $M^\star$ is exponentially small in horizon using a polynomial-sized dataset. 
\end{proposition}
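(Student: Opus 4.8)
The plan is to exhibit a deterministic MDP whose true dynamics occupy only a thin, perfectly-covered ``backbone'' of the state space, while the confusing model $M$ hides its error inside fictional states that the true dynamics never visit and that therefore do not enter the coverage computation. Concretely, I would let $M^\star$ be a single-state-per-layer chain $s_1 \to s_2 \to \cdots \to s_H$ in which \emph{both} actions lead forward to the unique next state, and in which every reward on the backbone is $0$. The data policy $\pi_\Dcal$ is uniform and the target policy $\pi$ executes a fixed action sequence $\bar a = (\bar a_1,\ldots,\bar a_H)$. Condition~(1) is then immediate: since each layer has a single backbone state, both $\pi$ and $\pi_\Dcal$ reach $s_h$ with probability one, so the coverage ratio collapses to $\pi(\bar a_h\mid s_h)/\pi_\Dcal(\bar a_h\mid s_h) = 1/(1/2) = 2$ on $\pi$'s actions and $0$ on the other action, giving the bound of $2$.

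Next I would construct $M$ so that the \emph{reward} consequence of its error is deferred to the very end of one specific length-$H$ roll-out. Add fictional states $f_2,\ldots,f_{H+1}$ acting as a ``matched-prefix counter'': $M$ agrees with $M^\star$ everywhere except that from $s_1$ action $\bar a_1$ goes to $f_2$, from each $f_h$ action $\bar a_h$ goes to $f_{h+1}$, and any other action from $f_h$ falls back to the backbone state $s_{h+1}$. All rewards stay $0$ except a single terminal reward $R(f_H,\bar a_H)=1$. Because the true dynamics never reach any $f_h$, these states carry $d_{M^\star}^\pi = d^\Dcal = 0$ and leave the coverage calculation untouched. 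Evaluating the target now yields $J_{M^\star}(\pi)=0$ but $J_M(\pi)=1$, since under $M$ the policy $\pi$ walks the fictional matched path and collects the terminal reward; hence $M$ makes a wrong OPE prediction.

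For the sample-complexity claim I would invoke the open-loop roll-out structure of the loss. The true model attains loss $0$. The model $M$ also attains loss $0$ on \emph{every} data trajectory whose action sequence differs from $\bar a$: such a roll-out either stays on the backbone or falls back to it at the first prefix mismatch, and in both cases predicts reward $0$ at every step, matching the observed zeros; the deferred reward is collected only when the data actions reproduce $\bar a$ in full. Thus $M$ and $M^\star$ are distinguishable only if some data trajectory equals $\bar a$, an event of probability $2^{-H}$ per trajectory under the uniform $\pi_\Dcal$, so a union bound over a dataset of $N=\mathrm{poly}(H)$ trajectories gives distinguishing probability at most $N 2^{-H} = 2^{-\Omega(H)}$.

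The main obstacle---and the conceptual heart of the construction---is that the two requirements pull in opposite directions: genuine exponential hardness seems to demand that detecting the error require a length-$\Omega(H)$ action sequence, yet encoding such a ``lock'' in the \emph{true} state space would make its deep states exponentially rare under $\pi_\Dcal$ and destroy state-action coverage. The resolution I would stress is that the reward-prediction loss never penalizes a transition error directly---only its \emph{downstream reward mismatch} along an open-loop roll-out. This lets me plant the transition error at the perfectly-covered pair $(s_1,\bar a_1)$ while using $M$'s fictional, and hence coverage-irrelevant, states purely as a prefix counter that withholds any reward discrepancy until the entire sequence $\bar a$ has been replayed. The one calculation needing care is verifying that every off-$\bar a$ trajectory---including partial-prefix matches that deviate midway---incurs zero reward mismatch, which is exactly what the fall-back-to-backbone transitions together with the single terminal reward guarantee.
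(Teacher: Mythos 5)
Your construction is correct, and its core mechanism is exactly the paper's: a single-chain true MDP (so every real state-action pair is covered with ratio at most $2$ under a uniform data policy), a wrong model whose reward discrepancy is deferred to the terminus of one specific length-$H$ action sequence, and the observation that the open-loop reward-prediction loss can only detect the discrepancy when a data trajectory reproduces that exact sequence, an event of probability $2^{-H}$, so a union bound over polynomially many trajectories finishes the argument. The difference is in how the wrong model stores its ``lock.'' The paper makes $M$ a complete binary tree over the shared state space $\{L,R\}^{h-1}$, with the all-R leaf carrying reward $100$; this keeps $M$ and $M^\star$ on the same state space but leaves exponentially many tree states uncovered, and the paper itself concedes afterward that ``the data fails to cover most part of the tree'' is a fair objection to the coverage definition. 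Your version instead gives $M$ a polynomial-size prefix-matching counter built from fictional states $f_2,\ldots,f_{H+1}$ outside the reachable true state space --- which is precisely the latent-state variant the paper sketches in one sentence after its proof as the fix for that objection, but does not work out. So your construction buys a cleaner rebuttal to the coverage complaint (every real state is genuinely well covered, and $M$'s error hides in its own internal states rather than in an uncovered region of $\Scal$), at the mild cost of $M$ no longer being a model over the same state space as $M^\star$, so one must either accept latent/extended-state models as valid candidates (as the paper explicitly does) or formally enlarge $\Scal$ to contain the $f_h$'s with arbitrary unreachable true dynamics. Your verification that every partial-prefix match falls back to the backbone and incurs zero reward mismatch is the one step the paper's tree construction gets for free (there, agreement holds because the final reward depends only on $a_H$ except at the all-R leaf), and you handle it correctly.
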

\begin{proof}
Let the state space at level $h$ be \{L,R\}$^{h-1}$ and there are two actions, L and R. We first describe the wrong model $M$: any $s_h$, $a_h$ deterministically transitions to a next-state, which simply appends $a_h$ to $s_h$. This is essentially a complete-tree MDP. There are no intermediate rewards. At $h=H$, all sequences ending with L have reward $1$. Those ending with R have reward $0$, with the exception that the reward for the all-R sequence is 100.

We now describe the true model $M^\star$: any $s_h$, $a_h$ deterministically transitions to $(L, \ldots, L)$. No intermediate rewards. At $h=H$, all sequences ending with L have reward $1$, and those ending with R have reward $0$. This is equivalent to a chain MDP where each level only has one state. Let $\pi_\Dcal$ be uniformly random, and it is easy to see that the data provides a constant coverage for all policies. 

To verify the claims, first notice that $M$ and $M^\star$ always give the same multi-step reward prediction for any action sequence, unless an all-R action sequence is taken from $h=1$, which happens with $(1/2)^H$ probability under $\pi_\Dcal$. However, the all-R policy is evaluated to give a return of $100$ in $M$, which is different from its true return $0$ in $M^\star$. 
\end{proof}

Some readers may complain that the data fails to cover most part of the tree, so it is more of a problem with the definition of coverage  than the loss. This is reasonable. That said, (1) MLE works under this coverage assumption and enjoys polynomial sample complexities, and (2) if $M$ is allowed to have its own latent state space, we can make the true environment a single chain and let the latent state of the model record the history. This way all real states will be properly covered. So this example is also related to the discussion of ``illegal states'' in Section~\ref{sec:compound}. 

\subsection{Other losses}
There are other losses that this note does not cover. Some of them have been studied carefully in theory and thus have their properties and limitations better understood. Notable examples include the value-aware loss \cite{farahmand2017value, sun2019model} ($\sup_{V\in\Vcal} \langle P(\cdot|s,a) - P^\star(\cdot|s,a), V\rangle$), inverse kinematics loss \citep[predicting $a_h$ from $\phi(s_h)$ and $\phi(s_{h+1})$:][]{pathak2017curiosity, mhammedi2023representation}, and a contrastive loss for telling real transitions from fake ones \citep{misra2020kinematic}. 

\section{Discussion}
Readers might notice phrases like ``in general'' and ``without further assumptions'' when this note makes a negative claim. Indeed, the counterexamples are meant to not only help build a more accurate mental model of how these methods work and what their limitations are, but also guide the search of additional assumptions and problem structures that circumvent these difficulties. As an example, Proposition~\ref{prop:muzero_stoch} shows that the multi-step reward prediction loss is problematic for stochastic environments. However, this does not mean that it cannot work in more structured settings; recently, \citet{tian2023toward} showed that the loss may be actually justified in LQG systems. It will be interesting to see whether such positive results can be extended to more general settings.

Along similar lines, writing this note leaves me the impression that properties of the \textit{learned} model (instead of the true environment) play a very important role in model-based RL, but have received disproportionately little attention. For example, the smoothness discussed in Section~\ref{sec:compound} is about the value function in $M$, not $M^\star$. On the other hand, the notion of coverage used by current analyses concerns the ratio $d_{M^\star}^{\pi} / d^\Dcal$, but if we invoke the simulation lemma in the other direction we obtain $d_{M}^\pi / d^\Dcal$, and requiring this alternative coverage will break the counterexample in Proposition~\ref{prop:muzero_exp}. And so on. An advantage of working with properties of $M$ is that, unlike the unverifiable properties of $M^\star$, $M$ is known to the learner, and understanding of its desirable 
properties may directly translate to algorithmic insights such as regularization. 

\section*{Acknowledgements}
The note was inspired by a conversation with Joshua  Zitovsky on model-based algorithms in deep RL. 
The author thanks Akshay Krishnamurthy and Ching-An Cheng for detailed feedback on a draft version of the note, and Wen Sun, Yi Tian, Kaiqing Zhang for valuable discussions. 

\bibliography{RL}
\end{document}